%
\documentclass[runningheads]{llncs}
\usepackage[T1]{fontenc}
%
\usepackage{graphicx}
%
%


\usepackage{amsmath,amsfonts,bm}









\def\eqref#1{equation~\ref{#1}}









\def\1{\bm{1}}








\def\vb{{\bm{b}}}

\def\vd{{\bm{d}}}

\def\vg{{\bm{g}}}
\def\vh{{\bm{h}}}

\def\vp{{\bm{p}}}

\def\vx{{\bm{x}}}
\def\vy{{\bm{y}}}
\def\vz{{\bm{z}}}


\def\evg{{g}}
\def\evh{{h}}

\def\evx{{x}}
\def\evy{{y}}


\def\mM{{\bm{M}}}

\def\mW{{\bm{W}}}

\DeclareMathAlphabet{\mathsfit}{\encodingdefault}{\sfdefault}{m}{sl}
\SetMathAlphabet{\mathsfit}{bold}{\encodingdefault}{\sfdefault}{bx}{n}




\def\sL{{\mathbb{L}}}

\def\sN{{\mathbb{N}}}

\def\sR{{\mathbb{R}}}

\def\sX{{\mathbb{X}}}










\usepackage{amssymb}
\usepackage[normalem]{ulem}
\usepackage{soul}
\usepackage{enumerate}
\usepackage{pgfplots}
\usepackage[symbol]{footmisc}

\definecolor{bblue}{rgb}{0,0.15,0.25}
\definecolor{bmblue}{rgb}{0,0.51,0.73}
\definecolor{borange}{rgb}{0.97,0.3,0.09}
\definecolor{cred}{rgb}{0.39,0,0}

\usepackage{xparse}
\ExplSyntaxOn
\DeclareExpandableDocumentCommand \round { O{0} m }
 { \fp_eval:n { round(#2,#1) } }
\ExplSyntaxOff

\newcommand{\underlineAccuracyLabel}[0]{\setulcolor{bmblue}\ul{Test accu}\setulcolor{borange}\ul{racy (\%)} \setulcolor{black}}

\newcommand{\accuracyPlot}[9]{
    \begin{tikzpicture}[scale=#1]
        \begin{axis}[
            cycle list name=mark list,
            table/col sep=comma,
            legend image post style={scale=0.3},
            legend style={at={(1,1)},anchor=north east},
            xtick=data,
            xticklabels from table={data/#2.csv}{global_sparsity},
            xlabel={\textbf{\Large #3}},
            x label style={at={(axis description cs:0.5,1.25)},anchor=north},
            ylabel={\huge #5},
            xticklabel style={font=\large, rotate = 70},
            yticklabel style={font=\large},
            ytick pos=left,
            ]
            \addplot+[bmblue, 
                line width={3pt},
                scatter,
                mark=none]
            table[x index=0,y expr=100*\thisrow{ACC_U}]
            {data/#2.csv};
            \addlegendentry{{Baseline}}
            \addplot+[borange, 
                line width={3pt},
                scatter,
                mark=none]
            table[x index=0,y expr=100*\thisrow{ACC_O}]
            {data/#2.csv};
            \addlegendentry{Ours}
        \end{axis}
        \begin{axis}[
        ylabel near ticks, yticklabel pos=right,
        table/col sep=comma, 
        nodes near coords,
        enlarge y limits={#4},
        xticklabels={,,},
        yticklabel style={font=\large},
        ylabel={\huge \color{cred} #6},
        ylabel style={rotate=180},
        xlabel={\huge #7},
        x label style={at={(axis description cs:0.5,#8)},anchor=south},
        ytick pos=right,
        ]
        \addplot[cred, ybar] table[x index=0, y expr={\round[1]{100*(\thisrow{ACC_O}-\thisrow{ACC_U})}}] {data/#2.csv};
        \end{axis}
    \end{tikzpicture}\hspace*{#9}
} %

\newcommand{\uboundPlot}[6]{
    \begin{tikzpicture}[scale=#1]
        \begin{axis}[
            cycle list name=mark list,
            table/col sep=comma,
            xtick=data,
            xticklabels from table={data/#2.csv}{global_sparsity},
            xlabel={\textbf{\Large #3}},
            x label style={at={(axis description cs:0.5,1.25)},anchor=north},
            ylabel={\huge \color{bmblue} \dashuline{#4}},
            xticklabel style={font=\large, rotate = 70},
            yticklabel style={font=\large},
            ytick pos=left,
            ]
            \addplot+[bmblue, 
                dashed, 
                line width={3pt},
                scatter,
                mark=none]
            table[x index=0,y=UB_U]
            {data/#2.csv};
        \end{axis}
    \begin{axis}[
            ylabel near ticks, yticklabel pos=right,
            cycle list name=mark list,
            table/col sep=comma,
            xtick=data,
            xticklabels={},
            xlabel={\huge Network density $p$},
            x label style={at={(axis description cs:0.5,-0.25)},anchor=south},
            ylabel={\huge \color{bmblue} \uline{#5}},
            ylabel style={rotate=180},
            xticklabel style={font=\small, rotate = 70},
            yticklabel style={font=\large},
            ytick pos=right,
            legend style={at={(0.2,0.6)}},
            ]
            \addplot+[bmblue, 
                line width={3pt},
                scatter,
                mark=none]
            table[x index=0,y expr=100*\thisrow{ACC_U}]
            {data/#2.csv};
        \end{axis}
    \end{tikzpicture}\hspace*{#6}
}

\newcommand{\acclrPlot}[6]{
    \begin{tikzpicture}[scale=#1]
        \begin{axis}[
            cycle list name=mark list,
            table/col sep=comma,
            legend columns=2,
            legend style={at={(1,1)},anchor=north east},
            xtick=data,
            xticklabels from table={data/#2.csv}{global_sparsity},
            xlabel={\textbf{\Large #3}},
            x label style={at={(axis description cs:0.5,1.25)},anchor=north},
            ylabel={\huge \color{olive} \uline{#4}},
            xticklabel style={font=\large, rotate = 70},
            yticklabel style={font=\large},
            ytick pos=left,
            ]
            \addplot+[olive, 
                line width={3pt},
                scatter,
                mark=none]
            table[x index=0,y=LR_U]
            {data/#2.csv};
        \end{axis}
    \begin{axis}[
            ylabel near ticks, yticklabel pos=right,
            cycle list name=mark list,
            table/col sep=comma,
            xtick=data,
            xticklabels={},
            xlabel={\huge Network density $p$},
            x label style={at={(axis description cs:0.5,-0.25)},anchor=south},
            ylabel={\huge \color{bmblue} \uline{#5}},
            ylabel style={rotate=180},
            yticklabel style={font=\large},
            ytick pos=right,
            ]
            \addplot+[bmblue, 
                line width={3pt},
                scatter,
                mark=none]
            table[x index=0,y expr=100*\thisrow{ACC_U}]
            {data/#2.csv};
        \end{axis}
    \end{tikzpicture}\hspace*{#6}
}

\begin{document}
\title{Getting Away with More Network Pruning: \\ 
From Sparsity to Geometry and Linear Regions}
\titlerunning{Getting Away with More Network Pruning}
%
\author{Junyang Cai\inst{*}\inst{1} \and
Khai-Nguyen Nguyen\inst{*}\inst{1} \and
Nishant Shrestha\inst{1} \and
Aidan Good\inst{1} \and \\
Ruisen Tu\inst{1} \and
Xin Yu\inst{2} \and
Shandian Zhe\inst{2} \and
Thiago Serra\inst{1}
}
\authorrunning{J. Cai, K-.N. Nguyen, et al.}
%
\institute{Bucknell University, United States \\
\email{\{jc092,nkn002,ns037,wag011,rt024,thiago.serra\}@bucknell.edu} \and
University of Utah, United States \\
\email{xin.yu@utah.edu,zhe@cs.utah.edu}}
\maketitle              

\renewcommand{\thefootnote}{\fnsymbol{footnote}}
\footnotetext[1]{Equal contribution.}

\begin{abstract}
One surprising trait of neural networks is the extent to which their connections can be pruned with little to no effect on accuracy.
But when we cross a critical level of parameter sparsity, 
pruning any further leads to a sudden drop in accuracy. 
This drop plausibly reflects a loss in model complexity, 
which we aim to avoid. 
In this work, 
we explore how sparsity also affects the geometry of the linear regions defined by a neural network, 
and consequently reduces the expected maximum number of linear regions based on the architecture. 
We observe that pruning affects accuracy similarly to how sparsity affects the number of linear regions and our proposed bound for the maximum number. 
Conversely, 
we find out that selecting the sparsity across layers to maximize our bound very often improves accuracy in comparison to pruning as much with the same sparsity in all layers,  
thereby providing us guidance on where to prune.  

\keywords{Model complexity \and  Network pruning  \and Solution counting.}
\end{abstract}
\section{Introduction}
In deep learning, 
there are often good results with little justification and good justifications with few results. 
Network pruning exemplifies the former: we can easily prune half or more of the connections of a neural network without affecting the resulting accuracy, but we may have difficulty explaining why we can do that. 
The theory of linear regions exemplifies the latter: we can theoretically design neural networks to express very nuanced functions, but we may end up obtaining much simpler ones in practice. 
In this paper, we posit that the mysteries of pruning and the wonders of linear regions can complement one another.

When it comes to pruning, 
we can reasonably argue that reducing the number of parameters improves generalization. 
While Denil et al.~\cite{denil2013parameters} show that the parameters of neural networks can be redundant, 
it is also known that the smoother loss landscape of larger neural networks leads to better training convergence~\cite{li2018landscape,ruoyu2020landscape}. 
Curiously, Jin et al.~\cite{jin2022generalization} argue that pruning also smooths the loss function, 
which consequently improves convergence during fine tuning --- the additional training performed after pruning the network. 
However, it remains unclear to what extent we can prune without ultimately affecting accuracy, 
which is an important concern since a machine learning model with fewer parameters can be deployed more easily in environments with limited hardware.  

The survey by Hoefler et al.~\cite{hoefler2021sparsity} illustrates that a moderate amount of pruning typically improves accuracy while further pruning may lead to a substantial decrease in accuracy, 
whereas Liebenwein et al.~\cite{liebenwein2021lost} show that this tolerable amount of pruning depends on the task for which the network is trained. 
In terms of what to prune, another survey by Blalock et al.~\cite{blalock2020survey} observes that most approaches consist of  either
removing parameters with the smallest absolute value  \cite{hanson1988minimal,mozer1989relevance,janowsky1989prunning,han2015connections,han2016deepcompression,li2017cnn,frankle2019lottery,elesedy2020lth,gordon2020bert,tanaka2020synapticflow,liu2021sparse}; or 
removing parameters with smallest expected impact on the output   \cite{lecun1989damage,hassibi1992surgeon,hassibi1993surgeon,lebedev2016braindamage,molchanov2017taylor,dong2017layerwise,yu2018importance,zeng2018mlprune,baykal2019coresets,lee2019pretraining,wang2019eigendamage,liebenwein2020provable,wang2020tickets,xing2020lossless,singh2020woodfisher,yu2022cbs}, to which we can add the special case of exact compression  \cite{serra2020lossless,sourek2021lossless,serra2021compression,ganev2022compression}.

While most work on this topic has helped us prune more with a lesser impact on accuracy,  
fairness studies recently debuted by Hooker et al.~\cite{hooker2019forget} have focused instead on the impact of pruning on recall --- the ability of a network to correctly identify samples as belonging to a certain class. 
Recall tends to be more severely affected by pruning in classes and features that are underrepresented in the  dataset~\cite{hooker2019forget,paganini2020responsibly,hooker2020bias}, 
which Tran et al.~\cite{tran2022disparate} attribute to differences across such groups in gradient norms and Hessian matrices of the loss function. 
In turn, Good et al.~\cite{good2022recall} showed that such recall distortions may also occur in balanced datasets, but in a more nuanced form: 
moderate pruning leads to comparable or better accuracy while reducing differences in recall, 
whereas excessive pruning leads to lower accuracy while increasing differences in recall. 
Hence, 
avoiding a significant loss in accuracy due to pruning is also relevant for fairness.

Overall, 
network pruning studies have been mainly driven by one question: \textbf{how can we get away with more network pruning?}
Before we get there with our approach, let us consider the other side of the coin in our narrative.

When it comes to the theory of linear regions, 
we can reasonably argue that the number of linear regions may represent the expressiveness of a neural network --- and therefore relate to its ability to classify more complex data. 
We have learned that a neural network can be a factored representation of functions that are substantially more complex than the activation function of each neuron. 
This theory is applicable to networks in which the neurons have piecewise linear activations, 
and consequently the networks represent a piecewise linear function in which the number of pieces --- or linear regions --- may grow polynomially on the width and exponentially on the depth of the network~\cite{pascanu2013on,montufar2014on}. 
When the activation function is the Rectified Linear Unit~(ReLU)~\cite{nair2010rectified,glorot2011rectifier}, 
each linear region corresponds to a different configuration of active and inactive neurons. 
For geometric reasons that we discuss later, not every such configuration is feasible.

The study of linear regions bears some resemblance to universal approximation results, 
which have shown that most functions can be approximated to arbitrary precision with sufficiently wide neural networks~\cite{cybenko1989approximation,funahashi1989approximate,hornik1989approximators}. 
These results were extended in~\cite{yarotsky2017relu} to the currently more popular ReLU activation  
and later focused on networks with limited width but arbitrarily large depth~\cite{lu2017expressive,hanin2017approximating}. 
In comparison to universal approximation, 
the theory of linear regions tells us what piecewise linear functions are possible to represent --- and thus what other functions can be approximated with them --- 
in a context of limited resources translated as both the number of layers and the width of each layer. 

Most of the literature is focused on fully-connected feedforward networks using the ReLU activation function, 
which will be our focus on this paper as well. 
Nevertheless, there are also adaptations and extensions of such results for convolutional networks by \cite{xiong2020cnn} and for maxout networks~\cite{goodfellow2013maxout} by \cite{montufar2014on,serra2018bounding,tseran2021expected,montufar2021maxout}. 

Several papers have shown that the right choice of parameters may lead to an astronomical number of linear regions~\cite{montufar2014on,telgarsky2015,arora2018understanding,serra2018bounding}, 
while other papers have shown that 
the maximum number of linear regions can be affected by narrow layers~\cite{montufar2017notes}, 
the number of active neurons across different linear regions~\cite{serra2018bounding}, 
and the parameters of the network~\cite{serra2020empirical}. 
Despite the exponential growth in depth, Serra et al.~\cite{serra2018bounding} observe that a shallow network may in some cases yield more linear regions among architectures with the same number of neurons. 
Whereas the number of linear regions among networks of similar architecture relates to the accuracy of the  networks~\cite{serra2018bounding}, 
Hanin and Rolnick~\cite{hanin2019complexity,hanin2019deep} show that the typical initialization and subsequent training of neural networks is unlikely to yield the expressive number of linear regions that have been reported elsewhere. 

These contrasting results lead to another question: \textbf{is the network complexity in terms of linear regions relevant to accuracy if trained models are typically much less expressive in practice?}
Now that you have read both sides of our narrative, 
you may have guessed where we are heading.

We posit that these two topics --- network pruning and the theory of linear regions --- can be combined. Namely, that the latter can guide us on how to prune neural networks, since it can be a proxy to model complexity. 

But we must first address the paradox in our second question. As observed by Hanin and Rolnick~\cite{hanin2019complexity}, 
perturbing the parameters of networks designed to maximize the number of linear regions, such as the one by Telgarsky~\cite{telgarsky2015}, leads to a sudden drop on the number of linear regions. 
Our interpretation is that every architecture has a probability distribution for the number of linear regions. 
If by perturbing these especially designed constructions we obtain networks with much smaller numbers, we may infer that these constructions correspond to the tail of that distribution. 
However, if certain architectural choices lead to much larger numbers of linear regions at best, 
we may also conjecture that the entire distribution shifts accordingly, 
and thus that even the ordinary trained network might be more expressive if shaped with the potential number of linear regions in mind. 
Hence, we conjecture the architectural choices aimed at maximizing the number of linear regions may lead better performing networks.

That brings us to a gap in the literature: 
to the best of our understanding, 
there is no prior work on how network pruning affects the number of linear regions. 
We take the path that we believe would bring the most insight, which consists of revisiting --- under the lenses of sparsity -- the factors that may limit the maximum number of linear regions based on the neural network architecture. 

In summary, this paper presents the following contributions:
\begin{enumerate}[(i)]
\item We prove an upper bound on the expected number of linear regions over the ways in which weight matrices might be pruned, which refines the bound in \cite{serra2018bounding} to sparsified weight matrices (Section~\ref{sec:bound}).
\item We introduce a network pruning technique based on choosing the density of each layer for increasing the potential number of linear regions (Section~\ref{sec:new}).
\item We propose a method based on Mixed-Integer Linear Programming~(MILP) to count linear regions on input subspaces of arbitrary dimension, 
which generalizes the cases of unidimensional~\cite{hanin2019complexity} and bidimensional~\cite{hanin2019deep} inputs;  
this MILP formulation includes a new constraint in comparison to~\cite{serra2018bounding} for correctly counting linear regions in general  (Section~\ref{sec:counting}). 
\end{enumerate}


\section{Notation}\label{sec:background}

In this paper, we study the linear regions defined by the fully-connected layers of feedforward networks. 
For simplicity, we assume that the entire network consists of such layers and that each neuron has a ReLU activation function, hence being denoted as a \emph{rectifier network}. 
However, our results can be extended to the case in which the fully-connected layers are preceded by convolutional layers, 
and in fact our experiments show their applicability in that context. 
We also abstract the fact that fully-connected layers are often followed by a softmax layer.

We assume that the neural network has an input $\vx = [\evx_1 ~ \evx_2 ~ \dots ~ \evx_{n_0}]^T$ from a bounded domain $\sX$ and corresponding output $\vy = [\evy_1 ~ \evy_2 ~ \dots ~ \evy_m]^T$, and each hidden layer $l \in \sL = \{1,2,\dots,L\}$ has output $\vh^l = [\evh_1^l ~ \evh_2^l \dots \evh_{n_l}^l]^T$ from neurons indexed by $i \in \sN_l = \{1, 2, \ldots, n_l\}$. 
Let $\mW^l$ be the $n_l \times n_{l-1}$ matrix where each row corresponds to the weights of a neuron of layer $l$, $\mW_i^l$ the $i$-th row of $\mW^l$, and $\vb^l$ the vector of biases associated with the units in layer $l$. With $\vh^0$ for $\vx$ and $\vh^{L+1}$ for $\vy$,  the output of each unit $i$ in layer $l$ consists of an affine function $\evg_i^l = \mW_{i}^l \vh^{l-1} + \vb_i^l$ followed by the ReLU activation $\evh_i^l = \max\{0, \evg_i^l\}$. 
We denote the neuron \emph{active} when $\evh_i^l = \evg_i^l > 0$ and \emph{inactive} when $\evh_i^l = 0$ and $\evg_i^l < 0$. 
We explain later in the paper how we consider the special case in which $\evh_i^l = \evg_i^l = 0$.

\section{The Linear Regions of Pruned Neural Networks}\label{sec:bound}

In rectifier networks, small perturbations of a given input produce a linear change on the output before the softmax layer. 
This happens because the neurons that are active and inactive for the original input remain in the same state if the perturbation is sufficiently small. Hence, as long as the neurons remain in their current active or inactive states, 
the neural network acts as a linear function. 

If we consider every configuration of active and inactive neurons that may be triggered by different inputs, then the network acts as a piecewise linear function. 
The theory of linear regions aims to understand what affects the achievable number of such pieces, 
which are also known as linear regions. 
In other words, we are interested in knowing how many different combinations of active and inactive neurons are possible, 
since they 
make the network behave differently for inputs that are sufficiently different from one another. 

Many factors may affect such number of combinations. We consider below some building blocks leading to an upper bound for pruned networks.  

\noindent \textbf{(i) The Activation Hyperplane:} 
Every neuron has an input space corresponding to the output of the neurons from the previous layer, or to the input of the network if the neuron is in the first layer. 
For the $i$-th neuron in layer $l$, that input space corresponds to $\vh^{l-1}$. 
The hyperplane $\mW_{i}^l \vh^{l-1} + \vb_i^l = 0$ defined by the parameters of the neuron separate the inputs in $\vh^{l-1}$ into two half-spaces. 
Namely, the inputs that activate the neuron in one side ($\mW_{i}^l \vh^{l-1} + \vb_i^l > 0$) from those that do not activate the neuron in the other side ($\mW_{i}^l \vh^{l-1} + \vb_i^l < 0$). 
We discuss in (iii) how we regard inputs on the hyperplane ($\mW_{i}^l \vh^{l-1} + \vb_i^l = 0$).

\noindent \textbf{(ii) The Hyperplane Arrangement:} 
With every neuron in layer~$l$ partitioning $\vh^{l-1}$ into two half-spaces, 
our first guess could be that the intersections of these half-spaces would lead the neurons in layer $l$ to partition $\vh^{l-1}$ into a collection of $2^{n_l}$ regions~\cite{montufar2014on}. 
In other words, that there would be one region corresponding to every possible combination of neurons being active or inactive in layer $l$. 
However, the maximum number of regions defined in such a way depends on the number of hyperplanes and the dimension of space containing those hyperplanes. Given the number of activation hyperplanes in layer $l$ as $n_l$ and assuming for now that the size of the input space $\vh^{l-1}$ is $n_{l-1}$, then the number of linear regions defined by layer $l$, or $N_l$, is such that  $N_l \leq \sum_{d=0}^{n_{l-1}} \binom{n_l}{d}$~\cite{zaslavsky1975arrangements}. Since $N_l \ll 2^{n_l}$ when $n_{l-1} \ll n_l$, we note that this bound can be much smaller than initially expected --- and that does not cover the other factors 
discussed in (iv), (v), and (vi). 

\noindent \textbf{(iii) The Boundary:} 
Before moving on, we note that the bound above counts the number of full-dimensional regions defined by a collection of hyperplanes in a given space. 
In other words, the activation hyperplanes define the boundaries of the linear regions and within each linear region the points are such that either $\mW_{i}^l \vh^{l-1} + \vb_i^l > 0$ or $\mW_{i}^l \vh^{l-1} + \vb_i^l < 0$ with respect to each neuron $i$ in layer $l$. Hence, this bound ignores cases in which we would regard $\mW_{i}^l \vh^{l-1} + \vb_i^l = 0$ as making the neuron inactive when $\mW_{i}^l \vh^{l-1} + \vb_i^l \geq 0$ for any possible input in $\vh^{l-1}$, and vice-versa when $\mW_{i}^l \vh^{l-1} + \vb_i^l \leq 0$, 
since in either case the linear region defined with $\mW_{i}^l \vh^{l-1} + \vb_i^l = 0$ would not be full-dimensional and would actually be entirely located on the boundary between other full-dimensional regions. 

\noindent \textbf{(iv) Bounding Across Layers:} 
As we add depth to a neural network, 
every layer of the network breaks each linear region defined so far in even smaller pieces with respect to the input space $\vh^0$ of the network. 
One possible bound would be the product of the bounds for each layer $l$ by assuming the size of the input space to be $n_{l-1}$~\cite{raghu2017expressive}. 
That comes with the assumption that every linear region defined by the first $l-1$ layers can be further partitioned by layer $l$ in as many linear regions as possible. 
However, this partitioning is going to be more detailed in some linear regions than in others because their input space might be very different. 
The output of a linear region in layer $l$ is defined by a linear transformation with rank at most $n_l$.  
The linear transformation would be $\vh^{l} = \mM^l \vh^{l-1} + \vd^l$,  
where $\mM^l_i = \mW^l_i$ and $\vd_i^l = \vb_i^l$ if neuron $i$ of layer $l$ is active in the linear region and $\mM^l_i = \textbf{0}$ and $\vd_i^l = 0$ otherwise. 
Hence, the output from a linear region is the composite of the linear transformations in each layer. 
If layer $l+1$ or any subsequent layer has more than $n_l$ neurons, 
that would not imply that the dimension of the image from any linear region is greater than $n_l$ 
since the output of any linear region after layer $l$ is contained in a space with dimension at most $\min\{n_0, n_1, \ldots, n_{l}\}$~\cite{montufar2017notes}. 
In fact, the dimension the of image is often much smaller if we consider that the rank of each matrix $\mM^l_i$ is bound by how many neurons are active in the linear region, 
and that in only one linear region of a layer we would see all neurons being active~\cite{serra2018bounding}.

\noindent \textbf{(v) The Effect of Parameters:} 
The value of the parameters may also interfere with the hyperplane arrangement. 
First, consider the case in which the rank of the weight matrix is smaller than the number of rows. 
For example, if all activation hyperplanes are parallel to one another and thus the rank of the weight matrix is 1. 
No matter how many dimensions the input space has, this situation is equivalent to drawing parallel lines in a plane. 
Hence, $n_l$ neurons would not be able to partition the input space into more than $n_l+1$ regions. 
In general, it is as if the dimension of the space being partitioned were equal to the rank of the weight matrix~\cite{serra2018bounding}. 
Second,
consider the case in which a neuron is stable, meaning that this neuron is always active or always inactive for any valid input~\cite{tjeng2019stability}. 
Not only that would affect the dimension of the image because a stably inactive neuron always outputs zero, 
but also the effective number of activation hyperplanes:  
since the activation hyperplane associated with a stable neuron has no inputs to one of its sides,  
it does not subdivide any linear region~\cite{serra2020empirical}.

\noindent \textbf{(vi) The Effect of Sparsity:} 
When we start making parameters of the neural network equal to zero through network pruning, we may affect the number of linear regions due to many factors. 
First, some neurons may become stable. 
For example, neuron $i$ in layer $l$ becomes stable if $\mW_i^l = 0$, 
i.e., if that row of parameters only has zeros, 
since the bias term alone ends up defining if the neuron is active ($\vb_i^l > 0$) or inactive ($\vb_i^l < 0$). 
That is also likely to happen if only a few parameters are left, 
such as when all the remaining weights and the bias are all either positive or negative, 
since the probability of all parameters having the same sign increases significantly as the number of parameters left decrease 
if we assume that parameters are equally likely to be positive or negative. 
Second, the rank of the weight matrix $\mW^l$ may decrease with sparsity. 
For example, let us suppose that the weight matrix has $n$ rows, $n$ columns, and that there are only $n$ nonzero parameters. 
Although it is still possible that those $n$ parameters would all be located in distinct rows and columns to result in a full-rank matrix, 
that would only occur in $\dfrac{n!}{\binom{n^2}{n}}$ of the cases if we assume every possible arrangement for those $n$ parameters in the $n^2$ different positions. 
Hence, we should expect some rank deficiency in the weight matrix even if we do not prune that much. 
Third, 
the rank of submatrices on the columns may decrease even if the weight matrix is full row rank. 
This could happen in the typical case where the number of columns exceeds the number of rows, such as when the number of neurons decreases from layer to layer, 
and in that case we could replace the number of active neurons with the rank of the submatrix on their columns for the dimension of the output from each linear region in order to obtain a tighter bound. 

Based on the discussion above, 
we propose an expected upper bound on the number of linear regions over the possible sparsity patterns of the weight matrices. 
We use an expected bound rather than a deterministic one to avoid the unlikely scenarios in which the impact of sparsity is minimal, 
such as in the previous example with $n$ parameters leading to matrix with rank $n$. 
This upper bound considers every possible sparsity pattern in the weight matrix as equally probable, 
which is an assumption that aligns with random pruning and does not seem to be too strict in our opinion. 
For simplicity, we assume that every weight of the network has a probability $p$ of not being pruned; or, conversely, a probability $1-p$ of being pruned. 
We denote $p$ as the network \emph{density}.

Moreover, we focus on the second effect of sparsity --- through a decrease on the rank of the weight matrix --- for two reasons: 
(1) it subsumes part of the first effect when an entire row becomes zero; and 
(2) we found it to be stronger than the third effect in preliminary comparisons with a bound based on it. 

\begin{theorem}\label{thm:main}
Let $R(l,d)$ be the expected maximum number of linear regions that can be defined from layer $l$ to layer $L$ with the dimension of the input to layer $l$ being $d$;  
and let $P(k | R, C, S)$ be the probability that a weight matrix having rank $k$ with $R$ rows, $C$ columns, and probability $S$ of each element being nonzero. 
With $p_l$ as the probability of each parameter in $\mW^l$ from remaining in the network after pruning --- the layer density, then $R(l,d)$ for $l=L$ is at most
\[
\sum\limits_{k=0}^{n_L} P(k | R = n_L, C = n_{L-1}, S=p_L)\sum\limits_{j=0}^{\min\{k,d\}} \binom{n_L}{j}
\]
and $R(l,d)$ for $1 \leq l \leq L-1$ is at most
\[
 \sum\limits_{k=0}^{n_l}P(k | R = n_l, C = n_{l-1}, S=p_l)\sum\limits_{j=0}^{\min\{k,d\}} \binom{n_l}{j} R(l+1, \min\{n_l-j,d,k\}).
\]
\end{theorem}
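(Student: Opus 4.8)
The plan is to prove the two displays together by backward induction on $l$, viewing the $l=L$ case as the base case under the convention $R(L+1,\cdot)\equiv 1$, since the output layer is affine and introduces no new regions. The heart of the argument is a deterministic bound on the region count from layer $l$ onward conditioned on the sparsity pattern of $\mW^l$, after which we take the expectation over that pattern, exploiting that it is independent of the patterns of layers $l+1,\dots,L$.

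Fix a sparsity pattern for $\mW^l$ and let $k$ be the rank it attains for generic nonzero entries (any non-generic choice only lowers the rank, hence weakly lowers the region count, so this loses nothing). Following (ii) and (v), the normals of the $n_l$ activation hyperplanes $\mW_i^l\vh^{l-1}+\vb_i^l=0$ span a $k$-dimensional subspace, so within the at-most-$d$-dimensional input space of $\vh^{l-1}$ the arrangement behaves like one in dimension $\min\{k,d\}$; by Zaslavsky's bound it cuts this space into at most $\sum_{j=0}^{\min\{k,d\}}\binom{n_l}{j}$ full-dimensional regions, and regions on which some $\mW_i^l\vh^{l-1}+\vb_i^l$ vanishes identically are lower-dimensional boundary pieces as in (iii) and are not counted. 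The refinement I need --- the one behind ``in only one linear region of a layer we would see all neurons being active'' in (iv) --- is that these regions can be assigned to levels $j=0,1,\dots,\min\{k,d\}$ with at most $\binom{n_l}{j}$ of them at level $j$, in such a way that every level-$j$ region has at least $j$ inactive neurons in layer $l$.

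Next I would bound the image dimension carried into layer $l+1$ from a level-$j$ region. There at most $n_l-j$ neurons are active, so the rows of $\mM^l$ that are not forced to zero form a submatrix of $\mW^l$ with at most $n_l-j$ rows and rank at most $k$; and the affine map $\vh^{l-1}\mapsto\mM^l\vh^{l-1}+\vd^l$ cannot produce an image of dimension larger than that of its domain, which is at most $d$. Hence $\vh^l$ ranges over an affine set of dimension at most $\min\{n_l-j,d,k\}$ on that region, and by the induction hypothesis layers $l+1,\dots,L$ subdivide it into at most $R(l+1,\min\{n_l-j,d,k\})$ regions --- here using that $R(l+1,\cdot)$ is a valid (monotone) bound for an input set of the given dimension and that the pruning of the later layers is independent of $\mW^l$. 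Summing over the at most $\binom{n_l}{j}$ regions at each level $j$ and over $j$ gives, conditioned on the pattern of $\mW^l$ having generic rank $k$, a bound of $\sum_{j=0}^{\min\{k,d\}}\binom{n_l}{j}\,R(l+1,\min\{n_l-j,d,k\})$; averaging over the pattern, which has generic rank $k$ with probability $P(k\mid R=n_l,C=n_{l-1},S=p_l)$, and using linearity of expectation, yields the recursion for $1\le l\le L-1$, while the $l=L$ display is exactly its specialization with $R(L+1,\cdot)\equiv 1$.

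The step I expect to be the main obstacle is the level decomposition invoked in the second paragraph: re-establishing, in the sparsified setting, that the regions of the (rank-$k$-collapsed) arrangement split into levels of sizes bounded by $\binom{n_l}{j}$ whose members each have at least $j$ inactive neurons, and checking that this is consistent both with summing $j$ only up to $\min\{k,d\}$ and with still carrying $k$ into the recursion through the term $\min\{n_l-j,d,k\}$. This is where Serra et al.'s arrangement-counting argument has to be re-run with the rank of $\mW^l$ in place of a fixed dimension; once it is available, the probabilistic part of the proof is just linearity of expectation together with the cross-layer independence of pruning.
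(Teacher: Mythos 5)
Your proposal is correct and follows essentially the same route as the paper: start from the Serra et al.\ recurrence, bound the regions of layer $l$ by Zaslavsky's formula with the rank $k$ of $\mW^l$ in place of the ambient dimension, conservatively assign at most $\binom{n_l}{j}$ regions to have $n_l-j$ active neurons so that the recursive call carries dimension $\min\{n_l-j,d,k\}$, and then take the expectation over the rank distribution induced by the layer density $p_l$. The level-decomposition step you flag as the main obstacle is handled in the paper only by citing Lemma~5 of Serra et al.\ and asserting the substitution of $k$ for the dimension, so your treatment is no less complete than the published argument.
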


\begin{proof}
We begin with a recurrence on the number of linear regions similar to the one in~\cite{serra2018bounding}. Namely, let $R(l,d)$ be the maximum number of linear regions that can be defined from layer $l$ to layer $L$ with the dimension of the input to layer $l$ being $d$, 
and let $N_{n_l,d,j}$ be the maximum number of regions from partitioning a space of dimension $d$ with $n_l$ activation hyperplanes such that $j$ of the corresponding neurons are active in the resulting subspaces ($|S^l|=j$):
\begin{align}
    R(l,d) = \left\{
    \begin{array}{cc}
        \sum\limits_{j=0}^{\min\{n_L,d\}} \binom{n_L}{j} & \text{if $l=L$}, \\
         \sum\limits_{j=0}^{n_l} N_{n_l,d,j} R(l+1, min\{j,d\}) & \text{if $1 \leq l \leq L-1$} 
    \end{array}
    \right.
\end{align}

Note that the base case of the recurrence directly uses what we know about the number of linear regions given the number of hyperplanes and the dimension of the space. That bound also applies to $\sum\limits_{j=0}^{n_l} N_{n_l,d,j}$ in the other case from the recurrence. 
Based on Lemma 5 from \cite{serra2018bounding}, $\sum\limits_{j=0}^{n_l} N_{n_l,d,j} \leq \sum\limits_{j=0}^{\min\{n_l, d\}} \binom{n_l}{j}$. 
Some of these linear regions will have more neurons active than others. 
In fact, there are at most $\binom{n_l}{j}$ regions with $|S^l|=j$ for each $j$. 
In resemblance to BC, we can thus assume that the largest possible number of neurons is active in each linear region defined by layer $l$ for the least impact on the input dimension of the following layers. Since $\binom{n_l}{j} = \binom{n_l}{n_l-j}$, we may conservatively assume that $\binom{n_l}{0}$ linear regions have $n_l$ active neurons, $\binom{n_l}{1}$ linear regions have $n_l-1$ active neurons, and so on. 
That implies the following refinement of the recurrence:
\begin{align}\label{eq:bc2}
    R(l,d) = \left\{
    \begin{array}{cc}
        \sum\limits_{j=0}^{\min\{n_L,d\}} \binom{n_L}{j} & \text{if $l=L$}, \\
         \sum\limits_{j=0}^{\min\{n_l,d\}} \binom{n_l}{j} R(l+1, min\{n_l - j,d\}) & \text{if $1 \leq l \leq L-1$} 
    \end{array}
    \right.
\end{align}
Note that there is a slight change on the recurrence call, by which $j$ is replaced with $n_l-j$, given that we are working backwards from the largest possible number of active neurons $n_l$ with $n_l-j$.


Finally, we account for the rank of the weight matrix upon sparsification. 
For the base case of $l=L$, 
we replace $n_L$ from the end of the summation range with the rank $k$ of the weight matrix $\mW^L$, 
and then we calculate the expected maximum number of linear regions using the probabilities of rank $k$ having any value from 0 to $n_L$ as 
\[
\sum\limits_{k=0}^{n_L} P(k | R = n_L, C = n_{L-1})\sum\limits_{j=0}^{\min\{k,d\}} \binom{n_L}{j}, 
\]
which corresponds to the first expression in the statement. 
For the case in which $l \in \{1, \ldots, L-1\}$, 
we similarly replace $n_l$ from the end of the summation range with the rank $k$ of the weight matrix $\mW^l$, 
and then we calculate the expected maximum number of linear regions using the probabilities of rank $k$ having any value from 0 to $n_l$ as 
\[
\sum\limits_{k=0}^{n_l}P(k | R = n_l, C = n_{l-1})\sum\limits_{j=0}^{\min\{k,d\}} \binom{n_l}{j} R^H(l+1, \min\{n_l-j,d,k\}),
\]
which corresponds to the second expression in the statement. \qquad \qquad \qquad \qquad $\blacksquare$
\end{proof}

Please note that the probability of the rank of a sparse matrix is not uniform when the probability of the sparsity patterns is uniform. We discuss how to compute the former from the later as one of the items in Section~\ref{sec:experiments}.

\section{Pruning Based on Linear Regions}\label{sec:new}

Based on Theorem~\ref{thm:main}, 
we devise a network pruning strategy for maximizing the number of linear regions subject to the total number of parameters to be pruned. 
For a global density $p$ reflecting how much should be pruned, 
we may thus choose a density $p_l$ for each layer $l$, 
some of which above and some of which below $p$ if we do not prune uniformly. 
We illustrate below the simpler case of pruning two hidden layers and not pruning the connections to the output layer, 
which is the setting used in our experiments. 
We focused on two layers because there is only one degree of freedom in that case: 
for any density $p_1$ that we choose, the density $p_2$ is implied by $p_1$ and by the global density $p$. 
When there are more layers involved, trying to optimize the upper bound becomes more challenging. 
If the effect is not as strong, it could be due to issues solving this nonlinear optimization problem rather than with the main idea in the paper. 

When pruning two layers, the relevant dimensions for us are the input size $n_0$ and the layer widths $n_1$ and $n_2$. 
Assuming the typical setting in which $n_0 > n_1 = n_2$, 
the maximum rank of both weight matrices is limited by the number of rows ($n_1$ for $\mW^1$ and $n_2$ for $\mW^2$). 
However, the greater number of columns in $\mW^1$ ($n_0$) implies that we should expect the rank of $\mW^1$ to be greater if $p_1 = p_2$, whereas preserving more nonzero elements in $\mW^2$ by pruning a little more from $\mW^ 1$ may change the probabilities for $\mW^2$ with little impact on those for $\mW^1$. 
In some of our experiments, the second layer actually has more parameters than the first, 
meaning that we need to consider $p_1 > p_2$ instead of $p_1 < p_2$.

From preliminary experimentation, 
we indeed observed that (i) pruning more from the layer with more parameters tends to be more advantageous in terms of maximizing the upper bound; 
and also that (ii) the upper bound can be reasonably approximated by a quadratic function. 
Hence, we use the extremes consisting of pruning as much as possible from each of the two layers, say $\overline{p}_1$ and $\overline{p}_2$, in addition to the uniform density $p$ in both layers to interpolate the upper bound. 
If that local maximum of the interpolation is not pruning more from the layer $l$ with more parameters, we search for the density $p_l$ that improves the upper bound the most by uniformly sampling densities from $p$ all the way to $\overline{p}_l$.

\section{Counting Linear Regions in Subspaces}\label{sec:counting}

Based on the characterization of linear regions in terms of which neurons are active and inactive, 
we can count the number of linear regions defined by a trained network with a Mixed-Integer Linear Programming~(MILP) formulation~\cite{serra2018bounding}. 
Among other things, these formulations have also been used for network verification~\cite{cheng2017mip}, embedding the relationship between inputs and outputs of a network into optimization problems~\cite{sanner2017planning,delarue2020rlvrp,bergman2022janos}, 
identifying stable neurons~\cite{tjeng2019stability} to facilitate adversarial robustness verification~\cite{xiao2019training} 
as well as network compression~\cite{serra2020lossless,serra2021compression}, 
and producing counterfactual explanations~\cite{kanamori2021ce}.
Moreover, several studies have analyzed and improved such formulations~\cite{fischetti2018mip,huchette2019ipco,botoeva2020dependency,serra2020empirical,huchette2020mp,serra2021compression}. 

In these formulations, 
the parameters $\mW^l$ and $\vb^l$ of each layer $l \in \sL$ are constant while the decision variables are the inputs of the network ($\vx = \vh^0 \in \sX$), the ouputs before and after activation of each feedforward layer ($\vg^l \in \sR^{n_l}$ and $\vh^l \in \sR_+^{n_l}$ for $l \in \sL$), and the state of the neurons in each layer ($\vz^l \in \{0,1\}^{n_l}$ for $l \in \sL$).  
By mapping these variables according to the parameters of the network, 
we can characterize every possible combination of inputs, outputs, and activation states as distinct solutions of the MILP formulation. 
For each layer $l \in \sL$ and neuron $i \in \sN_l$, 
the following constraints associate the input $\vh^l$ with the outputs $\vg^l_i$ and $\vh^l_i$ as well as with the neuron activation $\vz^l_i$:
\begin{align}
    \mW_i^l \vh^{l-1} + \vb_i^l = \vg_i^l \label{eq:mip_unit_begin} \\
    (\vz_i^l = 1) \rightarrow \vh_i^l = \vg_i^l \label{eq:first_indicator} \\  
    (\vz_i^l = 0) \rightarrow \vg_i^l \leq 0 \\
    (\vz_i^l = 0) \rightarrow \vh_i^l = 0 \label{eq:last_indicator} \\
    \vh_i^l \geq 0 \\
    \vz^l_i \in \{0,1\} \label{eq:mip_unit_end}
\end{align}
The indicator constraints (\ref{eq:first_indicator})--(\ref{eq:last_indicator}) can be converted to linear inequalities~\cite{bonami2015indicator}.

We can use such a formulation for counting the number of linear regions based on the number of distinct solutions on the binary vectors $\vz^l$ for $l \in \sL$. 
However, we must first address the implicit simplifying assumption allowing us to assume that a neuron can be either active ($\vz^l_i = 1$) or inactive ($\vz^l_i=0$) when the preactivation output is zero ($\vg^l_i = 0$) in (\ref{eq:mip_unit_begin})--(\ref{eq:mip_unit_end}). We can do so by maximizing the value of a continuous variable that is bounded by the preactivation output of every active neuron and the negated preactivation output of every inactive neuron. 
In other words, we count the number of solutions on the binary variables for the solutions with positive value for the following formulation:
\begin{align}
    \max ~~~& f \\
    \text{s.t.} ~~~& (\ref{eq:mip_unit_begin})-(\ref{eq:mip_unit_end}) & \forall l \in \sL, i \in \sN_l \\
    & (\vz_i^l = 1) \rightarrow f \leq \vg_i^l & \forall l \in \sL, i \in \sN_l \\
    & (\vz_i^l = 0) \rightarrow f \leq -\vg_i^l & \forall l \in \sL, i \in \sN_l \label{eq:new} \\
    & \vh^0 \in \sX
\end{align}
We note that constraint~(\ref{eq:new}) has not been used in prior work, where it is assumed that the neuron is inactive when $\vg^l_i = 0$~\cite{serra2018bounding,serra2020empirical}. 
However, its absence makes the counting of linear regions incompatible with the theory used to bound the number of linear regions, which assumes that only full-dimensional linear regions are valid. 
Hence, this represents a small correction to count all the linear regions. 

Finally, we extend this formulation for counting linear regions on a subspace of the input. 
This form of counting has been introduced by~\cite{hanin2019complexity} for 1-dimensional inputs  
and later extended by~\cite{hanin2019deep} to 2-dimensional inputs. 
Although far from the upper bound, the number of linear regions can still be very large even for networks of modest size, 
which makes the case for analyzing how neural networks partition subspaces of the input. 
In prior work, 1 and 2-dimensional inputs have been considered as the affine combination of 1 and 2 samples with the origin, 
and a geometric algorithm is used for counting the number of linear regions defined. 
We present an alternative approach by adding the following constraint to the MILP formulation above 
in order to limit the inputs of the neural network:
\begin{align}
    \vh^0 = \vp^0 + \sum_{i=1}^S \alpha_i (\vp^i - \vp^0)
\end{align}
Where $\{ \vp^i \}_{i = 0}^S$ is a set of $S+1$ samples and $\{ \alpha_i \}_{i=1}^S$ is a set of $S$ continuous variables. One of these samples, say $\vp^0$, could be chosen to the be origin. 

\section{Computational Experiments}\label{sec:experiments}

We ran computational experiments aimed at assessing the following items: 
\begin{enumerate}[(1)]
\item \label{exp_item1} if accuracy after pruning and the number of linear regions are connected; 
\item \label{exp_item2} if this connection also translates to the upper bound from Theorem~\ref{thm:main}; and 
\item \label{exp_item3} if that bound can guide us on how much to prune from each layer. 
\end{enumerate}

Our experiments involved models trained on the datasets MNIST~\cite{lecun1998mnist}, 
Fashion~\cite{xiao2017fashion},
CIFAR-10~\cite{krizhevsky2009cifar}, 
and CIFAR-100~\cite{krizhevsky2009cifar}. 
We used multilayer perceptrons having 20, 100, 200, and 400 neurons in each of their 2 fully-connected layers (denoted as $2 \times 20$, $2 \times 100$, $2 \times 200$, and $2 \times 400$), as well as adaptations of the LeNet~\cite{lecun1998mnist} and AlexNet~\cite{krizhevsky2017imagenet} architectures. 
For each choice of dataset and architecture used, we trained and pruned 30 models. 
Only the fully-connected layers were pruned. 
In the case of LeNet and AlexNet, we considered the output of the last convolutional layer as the input for upper bound calculations, 
as if their respective dimensions were $400 \times 128 \times 84$ and $1024 \times 4096 \times 4096$.
We removed the weights with smallest absolute value (magnitude pruning), 
using either the same density $p$  on each layer or choosing different densities while pruning the same number of parameters in total. 
We defer other details about the experiments for after presenting what we tested and our findings.

\noindent \textbf{Experiment 1:} 
We compared the mean accuracy of networks that are pruned uniformly according to their network density with the number of linear regions on subspaces defined by random samples from the datasets (Figure~\ref{fig:exact_acc}) as well as with the upper bound with input dimensions matching those subspaces (Figure~\ref{fig:exact_ub}). 
We used a simpler architecture ($2 \times 20$) to keep the number of linear regions small enough to count
and a simpler dataset (MNIST) to obtain models with good accuracy. 
In this experiment, 
we observe that indeed the number of linear regions drops with network density and consequently with accuracy. 
However, the most relevant finding is that the upper bound also drops in a similar way, 
even if its values are much larger. 
This finding is important because it is actionable: 
if we compare the upper bound resulting from different pruning strategies, 
then we may prefer a pruning strategy that leads to a smaller drop in the upper bound. 
Moreover, it is considerably cheaper to work with the upper bound 
since we do not need to train neural networks and neither count their linear regions. 

\begin{figure}[h!]
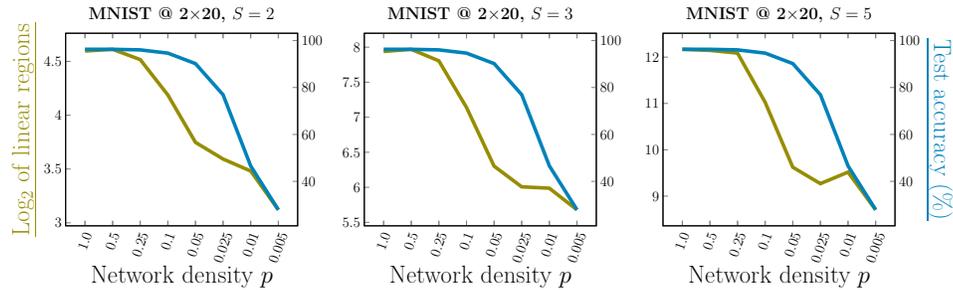

    \centering
\acclrPlot{0.45}{Perceptron20_exact_2_MNIST}{MNIST @ 2$\times$20, $S=2$}{Log$_2$ of linear regions}{}{-0.5cm}
\acclrPlot{0.45}{Perceptron20_exact_3_MNIST}{MNIST @ 2$\times$20, $S=3$}{}{}{-0.5cm}
\acclrPlot{0.45}{Perceptron20_exact_5_MNIST}{MNIST @ 2$\times$20, $S=5$}{}{Test accuracy (\%)}{-0.5cm}
\caption{Comparison between 
 mean number of linear regions on the affine subspace defined by $S=2$, 3, or 5 sample points (\textcolor{olive}{\uline{olive}} curve) 
and mean test accuracy (\textcolor{bmblue}{\uline{blue}} curve; right y axis) 
with the same density $p$ used to prune both layers of the networks.}
\label{fig:exact_acc}
\end{figure}

\begin{figure}
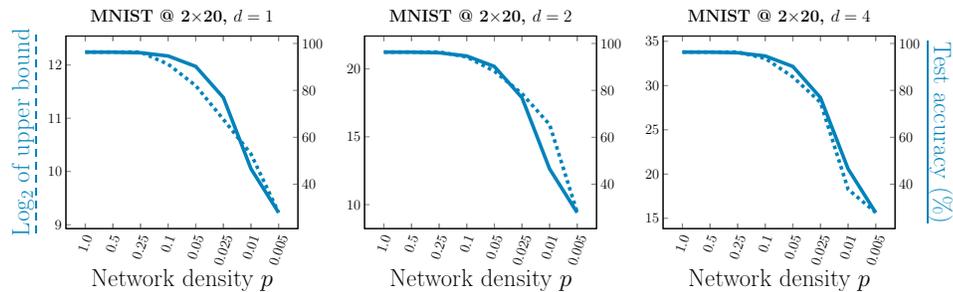

    \centering
\uboundPlot{0.45}{Perceptron20_exact_2_MNIST}{MNIST @ 2$\times$20, $d=1$}{Log$_2$ of upper bound}{}{-0.5cm}
\uboundPlot{0.45}{Perceptron20_exact_3_MNIST}{MNIST @ 2$\times$20, $d=2$}{}{}{-0.5cm}
\uboundPlot{0.45}{Perceptron20_exact_5_MNIST}{MNIST @ 2$\times$20, $d=4$}{}{{Test accuracy (\%)}}{-0.5cm}
\caption{Comparison between 
the upper bound from Theorem~\ref{thm:main} (\textcolor{bmblue}{\dashuline{dashed blue}} curve) 
for input dimension $d=1$, 2, and 4 (equivalent to $S=2$, 3, and 5) 
and mean test accuracy (\textcolor{bmblue}{\uline{continuous blue}} curve; right y axis) 
for the same networks and densities from Figure~\ref{fig:exact_acc}.}
    \label{fig:exact_ub}
\end{figure}

\noindent \textbf{Experiment 2:} 
We compared using the same density $p$ in each layer 
with using per layer densities as described in Section~\ref{sec:new}. 
We evaluated the simpler datasets (MNIST, Fashion, and CIFAR-10) on the simpler architectures (multilayer perceptrons and LeNet) in Figure~\ref{fig:accuracy}, where every combination of dataset and architecture is tested to compare gains across network sizes and datasets. 
We set aside the most complex architecture (AlexNet) and the most complex datasets (CIFAR-10 and CIFAR-100) in Figure~\ref{fig:accuracy_alexnet}. 
In this experiment, 
we observe that pruning the fully-connected layers differently and oriented by the upper bound indeed leads to more accurate networks. 
The difference between the pruning strategies is noticeable once the network density starts impacting the network accuracy. 
We intentionally evaluated network densities leading to very different accuracies and all the way to a complete deterioration of network performance, and we notice that the gain is consistent across all of them them. 
If the number of parameters is similar across fully-connected layers, such as in the case of $2 \times 400$, 
we notice that the gain is smaller because more uniform densities are better for the upper bound. 
Curiously, we also observe a relatively greater gain with our pruning strategy for CIFAR-10 in the case of multilayer perceptrons.

\begin{figure}
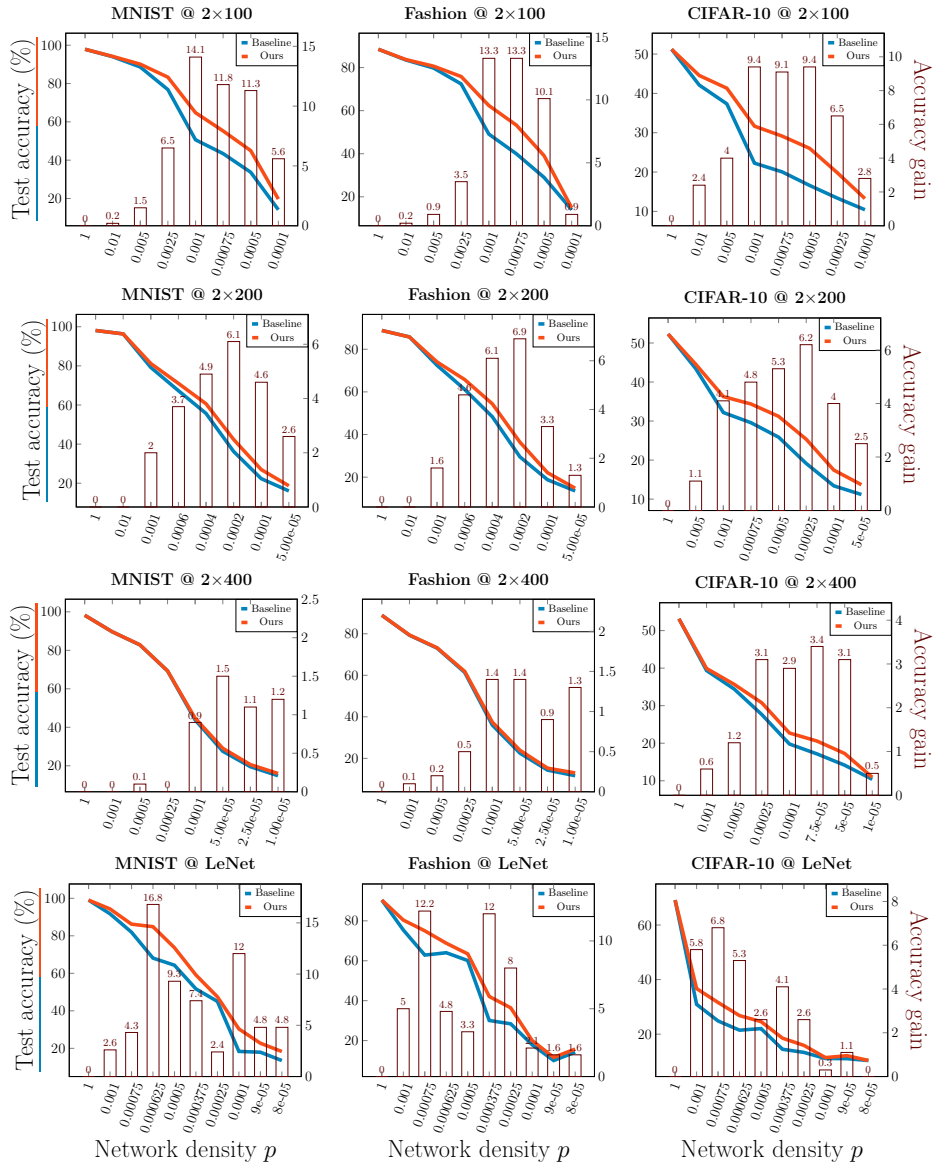

    \centering
\accuracyPlot{0.45}{mnist100}{MNIST @ 2$\times$100}{upper,abs value=2}{\underlineAccuracyLabel}{}{}{}{-0.5cm}
\accuracyPlot{0.45}{fashion100}{Fashion @ 2$\times$100}{upper,abs value=2}{}{}{}{}{-0.5cm}
\accuracyPlot{0.45}{cifar100}{CIFAR-10 @ 2$\times$100}{upper,abs value=2}{}{Accuracy gain}{}{}{-0.5cm}

\accuracyPlot{0.45}{mnist200}{MNIST @ 2$\times$200}{upper,abs value=1}{\underlineAccuracyLabel}{}{}{}{-0.5cm}
\accuracyPlot{0.45}{fashion200}{Fashion @ 2$\times$200}{upper,abs value=1}{}{}{}{}{-0.5cm}
\accuracyPlot{0.45}{cifar200}{CIFAR-10 @ 2$\times$200}{upper,abs value=1}{}{Accuracy gain}{}{}{-0.5cm}

\accuracyPlot{0.45}{mnist400}{MNIST @ 2$\times$400}{upper,abs value=1}{\underlineAccuracyLabel}{}{}{}{-0.5cm}
\accuracyPlot{0.45}{fashion400}{Fashion @ 2$\times$400}{upper,abs value=1}{}{}{}{}{-0.5cm}
\accuracyPlot{0.45}{cifar400}{CIFAR-10 @ 2$\times$400}{upper,abs value=1}{}{Accuracy gain}{}{}{-0.5cm}

\accuracyPlot{0.45}{mnistlenet}{MNIST @ LeNet}{upper,abs value=2}{\underlineAccuracyLabel}{}{Network density $p$}{-0.37}{-0.5cm}
\accuracyPlot{0.45}{fashionlenet}{Fashion @ LeNet}{upper,abs value=2}{}{}{Network density $p$}{-0.37}{-0.5cm}
\accuracyPlot{0.45}{cifarlenet}{CIFAR-10 @ LeNet}{upper,abs value=2}{}{Accuracy gain}{Network density $p$}{-0.37}{-0.5cm}
    
\caption{Comparison between 
the mean {test} accuracy as fully-connected layers are pruned {using the baseline method and our method} with each network density $p$. {In the \textbf{baseline} method, the same density is used in all layers (\textcolor{bmblue}{\uline{blue}} curve). In \textbf{our method}, layer densities are chosen to maximize the bound from Theorem~\ref{thm:main} while pruning the same number of parameters (\textcolor{borange}{\uline{orange}} curve)}. The \textbf{accuracy gain} from using our method instead of the baseline is shown in the scaled columns (\textcolor{cred}{\textbf{maroon}} bars; right y axis). Each column refers to a dataset among MNIST, Fashion, and CIFAR-10. 
Each row refers to an architecture among multilayer perceptrons ($2 \times 100$, $2 \times 200$, and $2 \times 400$) and LeNet. 
We test every combination of dataset and architecture. 
}
    \label{fig:accuracy}
\end{figure}

\begin{figure}
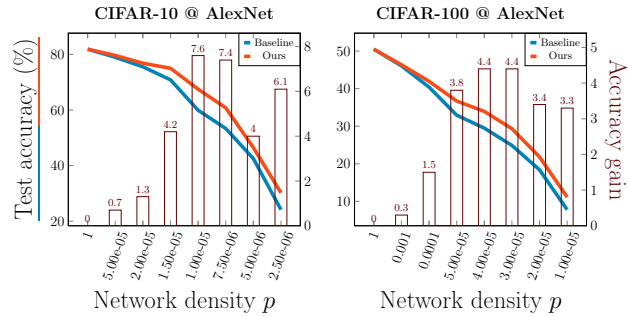

    \centering
\accuracyPlot{0.45}{alexnet_cifar10}{CIFAR-10 @ AlexNet}{upper, abs value= 1}{\underlineAccuracyLabel}{}{Network density $p$}{-0.38}{-0.5cm}
\accuracyPlot{0.45}{alexnet_cifar100}{CIFAR-100 @ AlexNet}{upper, abs value= 1}{}{Accuracy gain}{Network density $p$}{-0.38}{-0.5cm}
    \caption{Comparison between mean test accuracy for the same strategies as in Figure~\ref{fig:accuracy} for the AlexNet architecture, in which we test the datasets CIFAR-10 and CIFAR-100.}
    \label{fig:accuracy_alexnet}
\end{figure}


\noindent \textbf{Additional Details:} 
Each network was trained for 15 epochs using stochastic gradient descent with batch size of 128 and learning rate of 0.01, pruned, and then fine-tuned with the same hyperparameters for another 15 epochs. 
We have opted for magnitude-based pruning due to its simplicity, popularity, and frequent use as a 
component of more sophisticated pruning algorithms~\cite{blalock2020survey,frankle2019lottery}. 
Our implementation is derived from the ShrinkBench framework~\cite{blalock2020survey}. 
In the baseline that we used, 
we opted for removing a fixed proportion of parameters from each layer (layerwise pruning) 
to avoid disconnecting the network, which we observed to happen under extreme sparsities if the parameters with smallest absolute value were mostly concentrated in one of the layers. 
We measured the mean network accuracy before pruning, which corresponds to network density $p=1$, 
as well as for another seven values of $p$. 
In the experiments in Figure~\ref{fig:accuracy}, 
the choices of $p$ were aimed at gradually degrading the accuracy toward random guessing, which corresponds to accuracy 10\% accuracy in those datasets with 10 balanced classes (MNIST, Fashion, and CIFAR-10). 
In the experiments with AlexNet in Figure~\ref{fig:accuracy_alexnet}, we aimed for a similar decay in performance. 

\noindent \textbf{Upper Bound Calculation:} 
Estimating the probabilities $P(k|R,C,S)$ in Theorem~\ref{thm:main} is critical to calculate the upper bound. 
For multilayer perceptrons and LeNet, 
we generated a sample of matrices with the same shape as the weight matrix for each layer and in which every element is randomly drawn from the normal distribution with mean 0 and standard deviation 1. 
These matrices were randomly pruned based on the density $p$, which may have been the same for every layer or may varied per layer as discussed later, 
and then their rank was calculated. 
We first generated 50 such matrices for each layer, kept track the minimum and maximum rank values obtained, $\min_r$ and $\max_r$, and then generated more matrices until the number of matrices generated was at least as large as $(\max_r-\min_r+1)*50$. 
For example, 50 matrices are generated if the rank is always the same, 
and 500 matrices are generated if the rank goes from 11 to 20. 
Finally, 
we calculated the probability of each possible rank based on how many times that value was observed in the samples. For example, if 10 out of 500 matrices have rank 11, then we assumed a probability of 2\% for the rank of the matrix to be 11. 
For AlexNet, 
the time required for sampling is considerably longer. 
Hence, we resorted to an analytical approximation which is faster but possibly not as accurate. 
For an $m \times n$ matrix, $m \leq n$, with density $p$, 
the probability of all the elements being zero in a given row is $(1-p)^n$. 
We can overestimate the rank of the matrix as the number of rows with nonzero elements, 
which then corresponds to a binominal probability distribution with $m$ independent trials having each a probability of success given by $1 - (1-p)^n$. 
For $2 \times 100$, calculating the upper bound takes 15-20 seconds with sampling and 0.5-1 second with the analytical approximation. For $2 \times 400$, we have 10-20 minutes vs. 20 seconds. For AlexNet, the analytical approximation takes 20 minutes.





\section{Conclusion}\label{sec:conclusion}

In this work, 
we studied how the theory of linear regions can help us identify how much to prune from each fully-connected feedforward layer of a neural network. 
First, 
we proposed an upper bound on the number of linear regions based on the density of the weight matrices when neural networks are pruned. 
We observe from Figure~\ref{fig:exact_ub} that the upper bound is reasonably aligned with the impact of pruning on network accuracy. 
Second, 
we proposed a method for counting the number of linear regions on subspaces of arbitrary dimension. 
In prior work, 
the counting of linear regions in subspaces is restricted to at most 3 samples and thus dimension 2~\cite{hanin2019deep}. 
We observe from Figure~\ref{fig:exact_acc} to the number of linear regions is also aligned with the impact of pruning on network accuracy --- although not as accurately as the upper bound. 
Third, and most importantly, 
we leverage this connection between the upper bound and network accuracy under pruning to decide how much to prune from each layer subject to an overall network density $p$. 
We observe from Figure~\ref{fig:accuracy} 
that we obtain considerable gains in accuracy across varied datasets and architectures by pruning from each layer in a proportion that improves the upper bound on the number of linear regions rather than pruning uniformly. 
These gains are particularly more pronounced when the number of parameters differs across layers.  
Hence, 
the gains are understandably smaller when the width of the layers increases (from 100 to 200 and 400) but greater when the size of the input increases (from 784 for MNIST and Fashion to 3,072 for CIFAR-10 with a width of 400). 
We also obtain positive results with pruning fully connected layers of convolutional networks as illustrated with LeNet and AlexNet, 
and in future work we intend to investigate how to also make decisions about pruning convolutional filters. 
Althought we should not discard the possibility of a confounding factor affecting both accuracy and linear regions, 
our experiments indicate that the potential number of linear regions can guide us on pruning more from neural networks with less impact on accuracy.

\paragraph{Acknowledgement}

We would like to thank the anonymous reviewers for their constructive feedback to improve this paper. 
Junyang Cai, Khai-Nguyen Nguyen, Nishant Shrestha, Aidan Good, Ruisen Tu, and Thiago Serra were supported by the National Science Foundation (NSF) grant IIS 2104583. 
Xin Yu and Shandian Zhe were supported by the NSF CAREER Award IIS 2046295.

%
%
%
\bibliographystyle{splncs04}
\bibliography{references}

\end{document}